\newtheorem{lemma}{Lemma}
\begin{document}

\title{Estimation of Convex Polytopes for Automatic Discovery of Charge State Transitions in Quantum Dot Arrays}
\author{Oswin Krause}
\affiliation{Department of Computer Science, University of Copenhagen, 2100 Copenhagen, Denmark}

\author{Bertram Brovang}
\affiliation{Center for Quantum Devices, University of Copenhagen, Universitetsparken 5, Copenhagen 2100, Denmark}

\author{Torbj\o rn Rasmussen}
\affiliation{Center for Quantum Devices, University of Copenhagen, Universitetsparken 5, Copenhagen 2100, Denmark}

\author{Anasua Chatterjee}
\affiliation{Center for Quantum Devices, University of Copenhagen, Universitetsparken 5, Copenhagen 2100, Denmark}
\author{Ferdinand Kuemmeth}
\affiliation{Center for Quantum Devices, University of Copenhagen, Universitetsparken 5, Copenhagen 2100, Denmark}

\date{\today}

\begin{abstract}
In spin based quantum dot arrays, material or fabrication imprecisions affect the behaviour of the device, which must be taken into account when controlling it. This requires measuring the shape of specific convex polytopes. In this work, we present an algorithm that automatically discovers count, shape and size of the facets of a convex polytope from measurements. Results on simulated devices as well as a real 2x2 spin qubit array show that we can reliably find the facets of the convex polytopes, including small facets with sizes on the order of the measurement precision.

\end{abstract}
\maketitle 

\section{Introduction}
Quantum devices controlled by gate voltages have wide-ranging applications, spanning from quantum computation, spintronics as well as the exploration of fundamental physics~\citep{nazarov2009quantum}. An important class are spin based quantum dot arrays, which are candidates for universal quantum computers~\citep{loss1998quantum}. In these devices, electrostatic forces are used to trap single electrons at discrete locations, so called quantum dots. Finding the correct control parameters to confine the desired amounts of electrons (e.g., one electron on each dot) or to move single electrons between different quantum dot locations is a key challenge and primary bottleneck in developing these devices.

In this work, we tackle the problem of controlling the electron transitions between quantum dot locations. We present an algorithm that is capable of discovering the set of possible electron transitions as well as their correct control parameters and demonstrate its performance on a real device. Our algorithm is based on a connection to computational geometry and phrases the optimization problem as estimating the facets of a convex polytope from measurements. While this problem is NP-hard, an approximated solution performs well and thus our algorithm represents the first \emph{practical} automatic tuning algorithm which has the prospects to be scaled to more than 2 or 3 quantum dots on real devices. We demonstrate its practicality on a simulated device with four quantum dots as well as a real device with four quantum dots (three qubit dots plus one sensor dot), which is already outside the scope of human tuning capabilities.
%
Our contributions are the following:
\begin{enumerate}
    \item We develop an algorithm that aims to find a sparse approximate solution of a convex polytope from measurements with as few facets as possible. For this we extend the non-convex large margin approach introduced by \cite{kantchelian2014large}.
    \item We proof a lemma that can be used to test the correctness of a learned polytope from measurements. Our result leads to an active learning scheme that iteratively improves the polytope estimate by adding informative samples that systematically disproves the previous solution. 
    \item We show applicability of our algorithm on a real quantum dot array, specifically a foundry-fabricated silicon device that is currently being developed for spin qubit applications. 
\end{enumerate}

\begin{figure}
	\includegraphics[width=\textwidth]{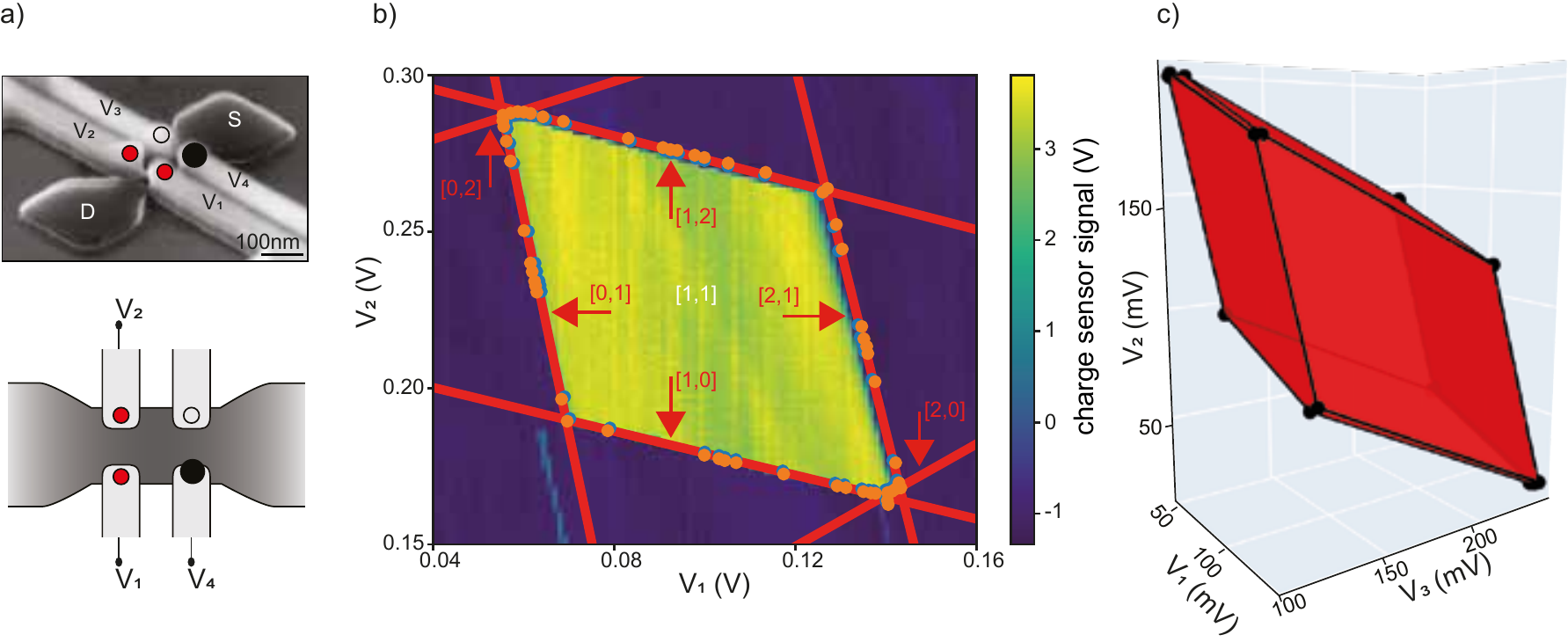}
	\caption{Automatic discovery algorithm implemented on a real quantum dot array with a sensor dot coupled to either a double dot or a triple dot. (a) Micrograph of the device. Individual electrons originating from the source (S) or drain (D) reservoirs can be trapped by gate-induced quantum dots (circles), located below electrostatic gate electrodes G$_{1,\dots4}$ and controlled by voltages $V_{1,\dots4}$. For the targeted $[1,1]$ state, each qubit dot (red) contains one electron, whereas the sensor dot (black) is used to generate a high bandwidth sensor signal. (b) 2D map of the sensor signal as a function of the control voltages $V_1$ and $V_2$, acquired in high resolution to illustrate the convex polytope of this device configuration. For all yellow pixels, the qubit array is in the $[1,1]$ state. (Whenever $V_{1,\dots3}$ is changed, $V_4$ is not held constant but compensates the sensor dot potential for capacitive cross talk from G$_1$, G$_2$ and G$_3$, see experiments.) Our algorithm estimates state transitions to other states (red lines), based on a small number of point pairs $(x^+, x^-)$ (blue and orange dots) obtained via line searches. Target states of transitions are annotated by $[x,y]$. (c) Estimated state boundaries of the 3D polytope associated with single-electron occupation of three quantum dots (under G$_1$, G$_2$ and G$_3$). The ground truth of this triple-dot system is already tedious to measure and unknown for this device tuning. 
	\label{fig:device_experiment}}
\end{figure}

In this work, we target quantum dot arrays such as the one depicted in Figure~\ref{fig:device_experiment}(a) (see also \citep{ansaloni2020single}). In this device, each gate-electrode is able to create a quantum dot below it, by choosing appropriate gate-voltages. We will refer to the vector of electron occupations on each dot as the state of the device. The ground state of the  device is essentially determined by classical physics  and can be described well via the constant interaction model \citep{schroer2007electrostatically}. In this model, the set of control parameters associated with a specific ground state forms a convex polytope. Its boundary is formed by the intersection of linear boundaries, each representing a state transition to another state~\cite{nazarov2009quantum}. With this polytope, the control task can be solved directly by selecting a control parameter path that leaves the current state through the desired state transition into the neighbouring target state. Unfortunately, due to manufacturing imprecisions, the parameters of the constant interaction model are unknown and nonlinear effects lead to small deviations of the model, even though ground-states typically remain convex polytopes. Thus, each manufactured device must be tuned individually in order to find the correct control parameters.

The problem of finding the control parameters becomes daunting in larger devices, as each added dot requires at least one additional gate electrode with its own control parameter. Qubit-qubit connectivities required for quantum simulations and quantum computing also constraints the geometrical layout of the quantum dots and their gate electrodes on a chip: Instead of individual isolated qubit dots with dedicated sensor dots, the current trend is to fabricate dense arrays of coupled qubit dots that are monitored by as few proximal sensor dots as possible. This complicates the tune-up in several ways. First, individual gate voltages no longer act locally, meaning that changing the gate voltage of one qubit also affects the potential of other, nearby qubits. Second, the signal of each sensor dot responds to charge changes of multiple dots, making it more difficult to determine which of the dots is making a transition. For example, in our device, a single charge sensor is responsible for monitoring all quantum dots, which means that its signal must be interpreted and does not directly relate to individual electron transitions~\citep{chanrion2020charge}.

Currently, experts find the correct control parameters manually using ad-hoc tuning protocols~\citep{botzem2018tuning,van2018automated}, which are tedious and time consuming. This limits devices to require tuning of at most three control parameters at once. The field has made first steps into automating parts of this process. For the problem of automating \emph{state identification}, i.e., estimating the electron count on each quantum dot given a set of control parameters, estimation algorithms can be grouped in two main directions. The first direction uses CNN-based approaches~\citep{kalantre2019machine,teske2019machine}, which requires dense rastering of image planes within the parameter space. An alternative approach explores the use of line searches to obtain a dataset of state transitions which are modeled using deep neural networks \citep{zwolak2020ray,zwolak2021ray,PhysRevApplied.13.034075}, which allows human experts to label the discovered regions. Both approaches have only been applied to devices with at most two quantum dots and a maximum of three tunable parameters. 

While promising, these approaches are not easily transferred to the task of finding \emph{state transitions}, as the important transitions are routinely very small and can only be found by high-resolution measurements. This makes rastering of images inefficient and random line searches unlikely to pass through them. Furthermore, modeling state transitions with a deep neural network suffers from the lack of interpretability of these models. Even if the correct model is learned, it is difficult to extract the number and sizes of state transitions, without difficult analysis of the learned model.

We will extend the second approach and use line searches to obtain a dataset, which we iteratively improve upon. We combine this with an interpretable model of the polytope which learns individual state transitions.
In line with the device limitations and measurement challenges outlined above, we will only assume that the sensor reliably detects that a state transition occurred, but not which state the device transitioned to. Using such a sensor, it is possible to locate a state transition in control parameter space using a line search procedure starting from a point within the convex polytope of a state. The resulting line search brackets the position of a state transition by pairs of control parameters $(x^+, x^-)$, each defining one point inside and outside of the (unknown) polytope. This allows for high precision measurements and locates the boundary within a margin of $\delta=\lVert x^+- x^- \rVert$, which is a tuneable parameter chosen based on the trade-off between line search precision and short measurement times.

Under the conditions outlined above, finding the true polytope from measurements is a hard machine learning task. Even if all parameters of the constant interaction model are known, computing the boundary of the polytope requires solving the subspace intersection problem \citep{barber1996quickhull}, with the number of intersections exponential in the number of quantum dots. Indeed, already finding the device state given a set of \emph{known} parameters of the constant interaction model requires solving an integer quadratic program, which is NP-hard in the number of quantum dots \citep{Chaovalitwongse2009}.
Thus, it is unlikely that learning the polytope from measurements is substantially easier. In fact, if we are given a training set of points which are labeled according to whether they are in- or outside the polytope, then finding the optimal large margin polytope is NP-hard. (See \citep{kantchelian2014large} for a review on recent results).
When relaxing the task to allowing a polytope with more facets than the optimal polytope, it has been shown that there exists a polynomial time algorithm  which fits $\ell$ points to a polytope that has at most a factor  of $O(\log \ell)$ more facets than the optimal polytope and has strong approximation guarantees \citep{kantchelian2014large}. However, the algorithmic complexity in our setting is $\ell^{O(1/\delta^2)}$ and thus too large for practical application. It is clear that allowing even more facets will make the task substantially easier, e.g., by adding one facet for each point outside the polytope, the task can be solved using $O(\ell)$ classification tasks, each separating one point outside the polytope from all points inside it. Practical algorithms therefore forego finding the optimal number of facets and use an approximate polynomial time solver  \citep{gottlieb2018learning, gardner2006convergence}. Howevever, there is a lack of numerically stable general purpose optimization algorithms that aim for a \emph{minimal} amount of facets and which go beyond convergence in a volume metric (for example Hausdorff in \citep{ gardner2006convergence}). This is especially problematic in our task, as each facet of the unknown polytope constitutes an operational resource (relocation of individual electrons in the array). Producing an estimate with too many (unphysical) facets might make it impractical to filter out the relevant facets. Our approach is similar to \cite{kantchelian2014large} in that we repeatedly solve convex relaxations of a large margin classification problem. In contrast to the related work, we use a numerically stable second order solver instead of SGD and design the problem such that it produces solutions with as few facets as possible. Random perturbations to the solutions found allow the discovery of different local optima from which we choose the best.

Our machine learning task extends over pure polytope estimation from measurements as our goal is not only to find the optimal polytope given a fixed set of points, but a polytope that generalizes well and accurately reflects the true polytope with all its facets. This is difficult to achieve using an i.i.d. dataset as facets with small surface area are unlikely to be found using random samples. Instead, a scheme is required that actively improves our dataset and ensures that we find all relevant facets that are detectable by our hardware.
We base our scheme on a theoretical result which can be used to assess whether two polytopes are the same. Intuitively, we must perform validation measurements on each facet to show that it is a facet of the true polytope, but we must also search for small (still undetected) facets that might be hiding near vertices of the current polytope. The resulting algorithm performs a line search in the directions of the corners and through each facet of our current best estimate of the polytope. If our polytope is correct, all state transitions of our model will be contained between the pairs $(x^+, x^-)$ of the performed line searches . Otherwise, we obtain examples of new points which we can add to our dataset and fit a new convex polytope. This process is repeated until we either run out of computation time or all performed line searches are correct. The details of our algorithm are given in the method section.

To investigate the performance and quality of our algorithm, we test it on three different setups: Two simulated problems in 3 or 4 dimensions based on either a device simulation or using constructed polytopes from Voronoi regions, and as third application the real device shown in Figure~\ref{fig:device_experiment}(a), where we activate two qubit dots (and the sensor dot) and let human experts verify the result. We further apply the algorithm to the same device with all three qubit dots activated. However, in this case no ground truth is available for comparison. In all simulated experiments, we compare our algorithm to an idealized baseline algorithm that additionally has access to the exact state information while estimating the polytope from measurements and otherwise uses the same active learning protocol. This algorithm assumes a more powerful device with a sensor signal that provides exact state information. (In principle, such a device can be realized by careful design of the charge sensor.) This idealized algorithm is useful to differentiate between the impact of our active learning scheme and the impact of approximately solving the NP-hard estimation problem.

The structure of the paper is as follows: We begin by defining the notation of our paper. In Section~\ref{sec:const_int}, we introduce the constant interaction model which links between polytope estimation and charge transitions in quantum dot devices. In Section~\ref{sec:alg}, we describe the main algorithm by first introducing a meta algorithm for active learning followed by a description of our algorithm for fitting a polytope to measurements. We describe our experiments in Section~\ref{sec:experiments}, present our results in Section~\ref{sec:results} and end with a discussion and conclusion of the paper in Section~\ref{sec:conc}.

\section{Notation}
A convex polytope $P \subset \mathbb{R}^d$ is defined as $P=\{x \in \mathbb{R}^d \mid A_k^Tx+b_k \leq 0, k=1,\dots, N \}$, where $A \in \mathbb{R}^{N \times d}$, $b \in \mathbb{R}^N$, $N$ is the number of inequality constraints and $A_k$ is a column vector representing the $k$-th row of $A$. We will denote the boundary of $P$ as $\partial P=\{x \in P \mid \exists k: A_k^Tx+b_k = 0 \}$. A facet of $P$ is a set of points $f_l=\{x \in P \mid A_l^Tx+b_l=0\} \subset \partial P$. We will also refer to $A_l$ as the normal of $f_l$.  We define as the inside of $f_l$ the open set of points $\{x \in f_l\mid x \notin f_k, k \neq l \}$.
We will call the extreme points of $P$ its vertices. We assume $P$ to be finite and non-degenerate and thus there exist exactly $N$ unique facets.

\section{The Constant Interaction Model}\label{sec:const_int}

In the following, we give a short introduction to the constant interaction model of quantum dot devices \citep{schroer2007electrostatically}. A quantum dot is modeled as a capacitor which can be charged with single electrons. The state of the device $s \in \mathbb{N}^{n_D}$ is the vector of electron occupancies, where, $s_i$ is the number of electrons on the $i$th quantum dot. When applying a vector of gate voltages $V_g \in \mathbb{R}^{n_g}$, the system will assume the state that minimizes the free energy:

\begin{equation}\label{eq:free_energy}
    F(s, V_g) = \frac 1 2 (\lvert e\rvert s + C_{Dg} V_g)^T C_{DD}^{-1} (\lvert e\rvert s +C_{Dg} V_g)
\end{equation}
Here, $\lvert e\rvert$ is the charge of a single electron and the matrices $C_{DD} \in \mathbb{R}^{n_D \times n_D}$ (symmetric positive definite) and $C_{Dg} \in \mathbb{R}^{n_D \times n_g}$ are capacitance matrices which model the interaction strength between dots and between dots and gates, respectively. The state that achieves the minimum free energy given a fixed $V_g$ can be computed by iterating over all possible states:

\begin{equation*}
    s^*(V_g)= \arg \min_{s \in \mathbb{N}^d } F(s,V_g)
    = \arg \min_{s \in \mathbb{N}^d } \frac {\lvert e\rvert^2} 2  s^T C_{DD}^{-1} s 
    + \lvert e\rvert s^T C_{DD}^{-1} C_{Dg} V_g 
\end{equation*}
Since \eqref{eq:free_energy} consists of terms which are at most linear in $V_g$, the boundary between neighbouring states $s$ and $r$ in state parameter space lies on a plane where the free energy of both states is equal and the plane fulfills
\begin{equation*}
    F(s,V_g)-F(r,V_g)=n_{sr}^T V_g + b_{sr} = 0\enspace.
\end{equation*}
Here, $n_{sr}=C_{Dg}^TC_{DD}^{-1}(s-r)$ is the normal of the plane and $b_{sr}= \lvert e\rvert/2( s^TC_{DD}^{-1} s - r^TC_{DD}^{-1}r)$ is its offset.
Since all boundaries are linear, the set of gate voltages $V_g$ that lead to a given state $s$ form a convex polytope created by the intersection of the planes:
\begin{align}
    P(s) &=\left\{ V_g \in \mathbb{R}^{n_g} \mid s^*(V_g)=s \right \}\notag\\
    &= \left\{ V_g \in \mathbb{R}^{n_g} \mid \forall r \in \mathbb{N}^{n_D}:\, F(s,V_g)-F(r, V_g) \leq 0 \right \}\notag\\
    &=\left\{ V_g \in \mathbb{R}^{n_g} \mid \forall r \in \mathbb{N}^{n_D}:\, n_{sr}^TV_g + b_{sr} \leq 0 \right \}\label{eq:CIMPolytope}
\end{align}
While the definition of \eqref{eq:CIMPolytope} runs over an infinite number of states, the number of facets which compose the boundary are finite. We will refer to states $r$ which share a facet with $s$ in $P(s)$ as neighbouring states. Thus, the polytope carries information about which state transitions to neighbouring states exist and can be used to control the device.

\section{Polytope Estimation}\label{sec:alg}
In this section, we describe our main contribution, an algorithm that can be used to find the facets of a convex polytope from measurements.
First, we introduce a meta algorithm that describes our active-learning procedure. It relies on an algorithm to estimate a convex polytope from measurements, which we describe afterwards.
\subsection{A Meta Algorithm For Finding Polytopes}

We employ a meta algorithm to iteratively improve an estimate $\hat{P}$ of a convex polytope $P$ based on line searches. For this, we use an algorithm $\mathcal{A}$ which computes $\hat{P}=\mathcal{A}(X)$, where $X$ is a training dataset of previously collected measurements. Our goal is to use active learning to propose new line search directions in order to add new points to $X$.
The meta algorithm is based on the following lemma:
\begin{lemma}\label{theo:equal}
Let $P$ and $\hat{P}$ be two convex polytopes in $\mathbb{R}^d$. Let $c_k, k=1,\dots, \hat{N}_c$ be the vertices of $\hat{P}$ and $f_l$, $l=1,\dots, \hat{N}$ its facets with normals $\hat{A}_l$. If
\begin{enumerate}
    \item $c_k \in \partial P,\;\forall k=1,\dots, \hat{N}_c$
    \item for each facet $f_l$, there exists a point $x_l$ inside of $f_l$ such that $x_l \in \partial P$,
\end{enumerate}
then $P=\hat{P}$.
\end{lemma}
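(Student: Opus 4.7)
The plan is to establish the two containments $\hat{P}\subseteq P$ and $P\subseteq \hat{P}$ separately, with the first coming essentially for free from convexity and the second being the meat of the argument.

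For $\hat{P}\subseteq P$: since $\hat{P}$ is the convex hull of its vertices $c_1,\dots,c_{\hat N_c}$, and condition (1) places each $c_k\in\partial P\subset P$, convexity of $P$ immediately gives every convex combination of the $c_k$ inside $P$, so $\hat{P}\subseteq P$.

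For $P\subseteq \hat{P}$, I would show that every defining inequality $\hat{A}_l^Tx+\hat{b}_l\le 0$ of $\hat{P}$ is in fact one of the defining inequalities of $P$ (up to a positive rescaling). Fix a facet $f_l$ with its distinguished boundary point $x_l$ from condition (2). Because $x_l\in\partial P$, there exists a facet $F$ of $P$ with normal $A$ and offset $b$ such that $A^Tx_l+b=0$ while $A^Ty+b\le 0$ for all $y\in P$. Let $v_1,\dots,v_m$ be the vertices of $\hat{P}$ lying on $f_l$; these are a subset of the $c_k$, so they are in $P$ and hence $A^Tv_i+b\le 0$. Writing $x_l=\sum_i \lambda_i v_i$ as a convex combination with strictly positive weights $\lambda_i>0$ (this is where relative-interior of $f_l$ is used), the identity
\begin{equation*}
0 \;=\; A^Tx_l+b \;=\; \sum_i \lambda_i\,(A^Tv_i+b)
\end{equation*}
forces each $A^Tv_i+b=0$, since each term is nonpositive and the weights are strictly positive.

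Hence all vertices of $f_l$ lie in the hyperplane $H=\{A^Tx+b=0\}$, and by convexity so does $f_l$ itself. Using the non-degeneracy assumption on $\hat{P}$, the facet $f_l$ is $(d-1)$-dimensional, so it spans $H$ as an affine subspace; therefore the hyperplane $\{\hat{A}_l^Tx+\hat{b}_l=0\}$ coincides with $H$, i.e.\ $(\hat{A}_l,\hat{b}_l)=\alpha(A,b)$ for some scalar $\alpha$. A quick sign check, comparing the two inequalities at any point strictly inside both $\hat{P}$ and $P$ (which exists because $\hat{P}\subseteq P$ and both have nonempty interior), forces $\alpha>0$. Thus the inequality $\hat{A}_l^Tx+\hat{b}_l\le 0$ is implied by the corresponding facet inequality of $P$, so every $x\in P$ satisfies it. Ranging over all $l$ yields $P\subseteq\hat{P}$, and combining with the reverse inclusion gives $P=\hat{P}$.

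The main obstacle I anticipate is the step extracting that the full hyperplane of $f_l$ is a facet-hyperplane of $P$: it crucially uses both the relative-interior condition on $x_l$ (so that all weights $\lambda_i$ are strictly positive in the convex combination) and the non-degeneracy of $\hat{P}$ (so that $f_l$ has full dimension $d-1$ inside its hyperplane). Without either of these, one could only conclude $f_l$ lies in some half-space of $P$, not that it sits exactly on a facet of $P$.
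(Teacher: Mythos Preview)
Your proof is correct and follows the same two-containment structure as the paper: $\hat P\subseteq P$ from convexity of $P$ and condition~(1), then $P\subseteq\hat P$ by showing each facet hyperplane of $\hat P$ is a supporting hyperplane of $P$. The only difference is in how the key step of the second containment is justified: the paper argues (rather tersely) that the supporting half-space of $P$ at $x_l$ must have normal $\hat A_l$ because $\hat P\subseteq P$ and $x_l$ lies in the relative interior of $f_l$, whereas you make this explicit by writing $x_l$ as a strictly positive convex combination of the vertices of $f_l$ and forcing each vertex onto the supporting hyperplane of $P$. Your argument is more self-contained and spells out exactly where the relative-interior and non-degeneracy hypotheses enter; the paper's is shorter but leaves that uniqueness-of-support step to the reader.
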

\begin{proof}
From the first point, we directly obtain $\hat{P} \subseteq P$ due to the definition of convexity.
To show $P \subseteq \hat{P}$, pick any $l \in\{1,\dots, \hat{N}\}$. Since $x_l \in \partial P$, there must exist a direction $p_l \in \mathbb{R}^d$ such that all $x \in P$ fulfill the constraint $p_l^T x - p_l^Tx_l \leq 0$. Since $x_l$ lies inside a facet of $\hat{P}$, we also have $\hat{A}_l^Tx - \hat{A}_l^Tx_l\leq 0$ for all $x \in \hat{P}$. Finally, as $\hat{P}\subseteq P$ and $x_l$ is inside of $f_l$, we have $p=\hat{A}_l$. As this holds for all facets of $\hat{P}$, we obtain $P \subseteq \{x \in \mathbb{R}^d \mid \hat{A}_l^Tx - \hat{A}_l^Tx_l\leq 0,\;\forall l=1,\dots, \hat{N} \}=\hat{P}$.
\end{proof}
This suggests a simple algorithm for iteratively improving an estimate of a convex polytope $\hat{P}$, by generating a set of query points $R$ on the boundary of $\hat{P}$ which includes the vertices of $\hat{P}$ as well as a point inside each facet. To obtain points inside the facets, we take all vertices which are on a given facet and compute their mean. Then, for each point $R_i$, we perform a line search from a point $o \in P \cap \hat{P}$ (for example the mean of points in the training set) in direction of $R_i-o$ and search for the intersection with $P$. All intersections (or a bracket containing the likely intersection interval) are added to $X$.
The algorithm terminates if all new points in $X$ lie sufficiently close to the boundary of $\hat{P}$, i.e., $\text{dist}(\partial \hat{P},x)< \epsilon_\text{end}, \forall x \in X$. Otherwise, we repeat using the updated training set.

\subsection{Large Margin Polytopes}
\label{sec:LMPoly}
The algorithm introduced above works well given an exact line search. In this case, we could choose $\mathcal{A}$ as a convex hull algorithm. However, this has two disadvantages:
\begin{enumerate}
 \item In the presence of noise in the line search, points within a facet will never lie on a hyperplane.
 \item Even with exact line searches, the number of facets found by the algorithm tends to increase over time, far exceeding the true number of facets. This is because line searches will almost surely not hit a true vertex of $P$, resulting in additional facets.
\end{enumerate}
Both problems can be understood by seeing the convex hull as a non-parametric model for estimating a convex polytope. Each sampled point increases the complexity of the model, which makes it prone to overfitting, and thus requires regularization. Instead of using a convex hull, we model this estimation problem via regularized large margin classification~\citep{boser1992training}. To this end, we require that the line search does not return an estimate of the point $x \in \partial P$, but instead a confidence interval $(x^-,x^+)$, where $x^- \in P$ and $x^+ \notin P$. For this, an existing line search can be adapted, for example by adding confidence bounds to its estimate. As a decision function to optimize, we use 
\begin{equation}
    f(x)=\max_{k\in \{1, \dots, \hat{N} \}}\hat{A}_k^Tx +\hat{b}_k\enspace,
\end{equation}
as with this choice it holds $\hat{P}=\{x\in \mathbb{R}^d \mid f(x)\leq 0\}= \{x \in \mathbb{R}^d \mid \hat{A}_k^Tx +\hat{b}_k \leq 0, k=1,\dots, \hat{N}\}$. $\hat{A}$ and $\hat{b}$ are the optimizable parameters and $\hat{N}$ must be chosen larger than the number of facets in $P$. With this model, the task is to find a function $f$ which separates the training set of points $x^-_j, j=1,\dots, \ell$ from $x^+_j, j=1,\dots, \ell$. Our approach is similar to the large margin approach in \cite{kantchelian2014large}. The soft-constrained large margin primal optimization problem can be stated as

\begin{align}
    \min_{\hat{A}, \hat{b}}&\; \Omega(\hat{A})
    +\frac C { \ell}\sum_{i=1}^\ell (\xi^+_i)^2 + \frac C { \ell}\sum_{i=1}^\ell (\xi^-_i)^2 \\
    \text{s.t.} &\; \max_j \hat{A}_j^T x^-_i +\hat{b}_j \leq - 1 + \xi^-_i, i=1,\dots, \ell \label{eq:neg_const} \\
    \wedge&\;  \max_j \hat{A}_j^T x^+_i +\hat{b}_j \geq 1 - \xi^+_i, i=1,\dots, \ell \label{eq:non_convex} \\
    \wedge&\;  \xi^-_i\geq 0 \wedge \xi^+_i \geq 0, i=1,\dots, \ell \enspace, \nonumber 
\end{align}
where $\Omega: \mathbb{R}^{\hat{N} \times d} \rightarrow \mathbb{R}$ is a regularizer with complexity parameter $C$ and $\xi^+_i$ and $\xi^-_i$ are slack variables. The constraint \eqref{eq:non_convex} turns the problem into a non-convex optimization problem, and to find the global optimum, we would need to infer correctly for each $\hat{A}_j$ those pairs of points that are separated by it (or, in the context of the application, the state of the device associated with $x_i^+$).

The problem can be relaxed and solved iteratively using a sequence of convex problems with solutions $(\hat{A}^{(t)}, \hat{b}^{(t)})$, $t=1,2,\dots$. This can be achieved via a convex relaxation  of \eqref{eq:non_convex} based on the fact that if we find any hyperplane $\hat{A}_{s_i},\hat{b}_{s_i}$ for which $\hat{A}_{s_i}^Tx^+_i +\hat{b}_{s_i} > 1 - \xi^+_i$,
we automatically fulfill constraint \eqref{eq:non_convex}. Thus, in iteration $t$ of the algorithm, we can take the previous estimate $\hat{A}^{(t-1)}, \hat{b}^{(t-1)}$ to find $s_i$ and solve the subproblem

\begin{align}
    \min_{\hat{A}, \hat{b}} &\; \Omega(\hat{A})
    +\frac C { \ell}\sum_{i=1}^\ell (\xi^+_i)^2 + \frac C { \ell}\sum_{i=1}^\ell (\xi^-_i)^2  \label{eq:conv_problem}\\
    \text{s.t.}&\; \max_j \hat{A}_j^Tx^-_i +\hat{b}_j \leq - 1 + \xi^-_i\nonumber\\
    \wedge&\; \hat{A}_{s_i}^Tx^+_i +\hat{b}_{s_i} \geq 1 - \xi^+_i, 
    s_i=\arg \max_s\left \{ \left(\hat{A}^{(t-1)}_s\right)^T x^+_i +\hat{b}^{(t-1)}_s\right \}\label{eq:non_convex_relaxed}\\
    \wedge &\; \xi^-_i \geq 0 \wedge \xi^+_i \geq 0, i=1,\dots, \ell \nonumber \enspace.
\end{align}
In the algorithm by \cite{kantchelian2014large}, $\Omega$ was chosen as the squared $L_2$-norm of the rows of $\hat{A}$, $\Omega(\hat{A})= \sum_{i=1}^{\hat{N}} \lVert \hat{A}_i \rVert^2$ and the problem was solved via stochastic gradient descent using a variant of the Pegasos \citep{shalev2011pegasos} solver, that 
re-estimated $s_i$ in \eqref{eq:non_convex_relaxed} during each stochastic gradient step. Additionally, the solver in \cite{kantchelian2014large} would assign points to boundaries that are not assigned enough points to prevent a sparse solution. 
In initial experiments, this solver did not lead to usable or even plausible results in our setting. This is likely due to the ill-conditioning introduced by creating pairs of samples which are very close to the boundary. The resulting ill-conditioning of up to $10^8$ necessitates second order optimization algorithms.

\begin{algorithm}[H]
\caption{Algorithm for approximate solution of the polytope estimation problem}
\label{alg:alg_outer_loop}
\begin{algorithmic}[1]
\STATE{\textbf{input} $X^+, X^-$, $n_{\text{repeat}} \in \mathbb{N}^+$, $C > 0$, $\sigma > 0$}
\STATE{\textbf{output} Estimated polytope parameters $\hat{A}^{\text{best}}, \hat{b}^{\text{best}}$}
\STATE{$\hat{A}^{\text{init}}, \hat{b}^{\text{init}}  \leftarrow$ 
\texttt{convex\_hull\_polytope}$(X)$}
\STATE{$\hat{b}^{\text{init}}\leftarrow \hat{b}^{\text{init}} - 1$}
\STATE{Solve \eqref{eq:conv_problem} starting from $\hat{A}^{\text{init}}, \hat{b}^{\text{init}}$ and prune to obtain $\hat{A}_{\text{sol}}^{(0)}, \hat{b}_{\text{sol}}^{(0)}$  with value $L^{(0)}$ }

\FOR{$i = 1,\dots,n_{\text{repeat}}$}
    \STATE{$ \hat{A}^{\text{noise}} \leftarrow  \hat{A}_{\text{sol}}^{(0)} + \mathcal{N}(0, \sigma^2)$}
    \STATE{$ \hat{b}^{\text{noise}} \leftarrow  \hat{b}_{\text{sol}}^{(0)} + \mathcal{N}(0, \sigma^2)$}
    \STATE{Solve \eqref{eq:conv_problem} starting from $\hat{A}^{\text{noise}}, \hat{b}^{\text{noise}}$ and prune to obtain $\hat{A}_{\text{sol}}^{(i)}, \hat{b}_{\text{sol}}^{(i)}$ with value $L^{(i)}$ }
\ENDFOR
\STATE{$i^*\leftarrow \arg \min_i L^{(i)}$}
\STATE{$\hat{A}^{\text{best}}, \hat{b}^{\text{best}} \leftarrow \hat{A}_{\text{sol}}^{(i^*)}, \hat{b}_{\text{sol}}^{(i^*)}$}

\end{algorithmic}
\end{algorithm}

We therefore included the following changes:
To minimize the number of facets of the polytope, we use a regularizer, which enforces setting whole rows $\hat{A}_i$ to zero. To this end, we propose to use the $L_{2,1}$~regularizer,
$L_{2,1}(\hat{A})=\sum_{i=1}^{\hat{N}} \lVert \hat{A}_i \rVert_2$.
This regularizer can be seen as a generalization of 1-norm~regularization to impose sparsity on row vectors and has among other applications been used successfully in the context of joint feature selection for multi task learning~\citep{liu2012multi}. With this regulariser, the problem \eqref{eq:conv_problem} becomes a second order cone program (SOCP) which we minimize in each iteration using a second order solver before re-computing $s_i$. We stop the optimization when the $s_i$ in two iterations are the same. To save computation time, we prune the solution of each subproblem by removing all hyperplanes with $\hat{A}_i = 0$ as they won't get assigned new points and thus they would remain zero due to the regularization term. 
Finally, we use a restarting strategy to find a good local optimum as described in Algorithm~\ref{alg:alg_outer_loop}. We first create an initial value $\hat{A}^{\text{init}}, \hat{b}^{\text{init}}$ by computing the convex hull of the points $x_j^-$. We subtract 1 from $\hat{b}^{\text{init}}$ so that the initial solution fulfills \eqref{eq:neg_const} with $\xi^- = 0$. Afterwards, we repeatedly solve problem \eqref{eq:conv_problem} to obtain a first local optimum $\hat{A}_{\text{sol}}^{(0)}, \hat{b}_{\text{sol}}^{(0)}$. In our experience, this solution still has too many facets. Thus, we repeatedly add noise to this solution, solve the problem and return the best solution found\footnote{We applied the same strategy to the original optimization problem of \cite{kantchelian2014large} as an ablation study, but numerical difficulties made it impossible to solve the problem reliably.}.

\section{Experiments}
\label{sec:experiments}
We implemented our algorithm by solving problem \eqref{eq:conv_problem} with cvxpy~\citep{diamond2016cvxpy, agrawal2018rewriting} using the ECOS solver for SOCP problems~\citep{Domahidi2013ecos}.
We compute halfspace intersections, convex hulls and their volumes using the Qhull library~\citep{barber1996quickhull}.
To save computation time, we include new point pairs $(x^-,x^+)$ into the training dataset only if it does not already contain any pair $(x^-_i,x^+_i)$ with $\lVert x_i^- - x^-\rVert < \epsilon_{\text{close}}$. The algorithm terminates when the distance between the sampled points $(x^-_i,x^+_i)$ to the boundary of the polytope is smaller than $\epsilon_{\text{end}}$.
To assess the performance of our algorithms, we compared our approach, where possible, to an algorithm that has perfect information regarding which point pair is cut by which hyperplane. For this, we additionally compute the state $s_i$ of $x^+_i$, pick $\hat{N}$ as the number of different observed states and solve \eqref{eq:conv_problem} using the obtained $s_i$. This way, the problem becomes convex and can be solved efficiently. We refer to this algorithm as our baseline.

To quantify the quality of the obtained convex polytopes, we define a matching error between the set of facets in the ground truth $f_l =\{x \in P \mid A_l^Tx + b_l = 0\} \in F$ and the facets in the estimated convex polytope $\hat{f}_k = \{x \in \hat{P} \mid \hat{A}_k^Tx + \hat{b}_k = 0\}\in \hat{F}$.
For this, we compute the minimum angle between the normal of the facet $f_l$ and the normals of all $\hat{f}_k$: We report an error if there is no $\hat{f}_k$ with angle smaller than 10 degrees:
\[
    E = \frac {1}{\lvert F \rvert} \sum_{l=1}^{\lvert F \rvert}  
    \mathbb{I} \left\{\min_{k} \arccos\left( \frac{A_l^T \hat{A}_k}{\lVert A_l \rVert \lVert \hat{A}_k \rVert}\right) \cdot \frac {180} {\pi} > 10 
\right\}
\]
This is a sufficient metric for estimating closeness of the facets: as the stopping condition of the algorithm already ensures that the estimated polytope must be very similar to the true polytope in terms of intersection over union, it is only the directions of the facet hyperplanes and their number that needs to be assessed here. To verify this claim, we also compute the intersection over union between the estimated and true ground truth polytopes:
\[
\text{IoU}(P, \hat{P}) = \frac{\text{Vol}(P \cap \hat{P})}{\text{Vol}(\hat{P}) + \text{Vol}(\hat{P}) - \text{Vol}(P \cap \hat{P})}
\]

To test our algorithms, we devise three experiments: Two simulated problems as well as
one application on a real device. As application, we consider the device shown in Figure~\ref{fig:device_experiment}(a),
consisting of a narrow silicon nanowire through which electrons can flow from source (S) to drain (D). Four gate electrodes (G$_{1,\dots4}$) are capacitively coupled to the nanowire, and under correct tuning of their voltages, quantum dots are formed that confine single electrons in small regions (tens of nanometers) of the nanowire.
One of the gate electrodes (G$_{4}$) is connected to a high frequency circuit and produces a sensor signal that responds very quickly to any rearrangements of electrons inside the nanowire. This sensor signal remains constant under voltage changes (line searches) of gate electrodes G$_{1}$, G$_{2}$, and G$_{3}$, as long as the state
does not change. 
Experimentally, this is accomplished by compensating $V_4$ for any voltage changes applied to G$_{1,2,3}$, such that the potential of the sensor dot is independent of the three control voltages $V_{1,2,3}$. This requires estimating the coefficients of a linear compensation function, which was done once before the algorithm is run.
As soon as the charge state changes, i.e., a state boundary is encountered, the sensor signal drops and generates one $(x^-_i,x^+_i)$ pair. The device in Figure~\ref{fig:device_experiment}(a) can create up to four quantum dots.
We applied our algorithm to a state with dot 1 and dot 2 each occupied by one electron, and use the two gate voltage parameters $V_1$ and $V_2$ to control this $[1,1]$~configuration. (Dot 3 is kept empty by keeping $V_3$ fixed at a sufficiently negative voltage). We repeat this experiment tuned such that an electron occupies the dot under G$_3$ as well and estimate the polytope using all three gate voltage parameters in the $[1,1,1]$~configuration.

For the simulated experiments, we implement a line search which returns point pairs such that $\lVert x_i^+-x_i^-\rVert < \delta$, where we vary $\delta$ across the experiments to measure the impact of the quality of the line search.
As our first simulated problem, we generate a set of polytopes via Voronoi regions. For this, we sample 30 points from a $d$-dimensional normal distribution in $\mathbb{R}^d$ with zero mean and diagonal covariance matrix $\Sigma$ with entries $\Sigma_{ii}= 2 \cdot 10^{\frac i d}$. For each set, we compute a Voronoi triangulation. We discard the sample if the origin is not inside a closed Voronoi region or this region extends outside the set $[-10,10]^d$. 
On this problem, we can use the Voronoi region around the origin as ground truth. This way, we obtain 100 convex polytopes with between 6 and 12 facets for $d=3$ and $d=4$. For the baseline algorithm, we use the index of the closest point as state of the Voronoi region.

As our second problem, we simulate a quantum dot array with 3 or 4 qubit dots using the constant interaction model. For simplicity, we omit an additional sensor dot that would normally sense the charge transitions within these triple- and quadruple-dot devices. Specifically, we generate 100 different device simulations by choosing one set of realistic device capacitances and adding noise to simulate variations in device manufacturing. To be able to use a similar tuning as in the Voronoi experiment, we re-scale the parameter space by a factor of 100 so that the polytopes cover roughly the $[-10,10]^d$ volume. 
As the ground truth polytope for each simulated device we compute that region in control-voltage space that has a ground state with a [1,1,1] or [1,1,1,1] charge configuration, i.e. exactly one electron per dot. These calculations yield 14 facets for all simulated triple-dot devices, and 30 facets for all simulated quadruple-dot devices. Errors made by the algorithm in finding the correct facets are discussed below.    

As parameters, we chose in all experiments $\epsilon_{\text{end}} = 1.5\delta$, $\epsilon_{\text{close}}=\delta$ and $n_{\text{repeat}} = 10$.
In the simulated experiments, we vary $\delta \in \{0.01, 0.05, 0.1, 0.2\}$, choose $\sigma = 0.001/ \delta$ and $C=75 / \delta$. For the baseline algorithm, we use the same parameters except $C=750 / \delta$. We obtain the initial dataset $X$ by performing 100 line searches in random directions starting from a known point inside $P$ (e.g., the origin for the Voronoi region based dataset).
For the real device, we implemented a line search in hardware using a DAC voltage generator. 
Instead of using the single point $x_i$ returned by the hardware, we add a confidence interval $x_i^{\pm} = x_i \pm \delta / 2$ with $\delta=0.001$, which takes into account the experimental measurement uncertainty. This choice of $\delta$ is equivalent to the $\delta=0.1$ setting in the simulated device on the re-scaled parameters. Further, we pick $\sigma = 0.05/ \delta$ and $C=10 / \delta$. The initial dataset was obtained via line searches using 10 random directions.

\section{Results}\label{sec:results}
\begin{figure}
	\centering
	\begin{subfigure}{0.45\textwidth}
		\centering
	\includegraphics{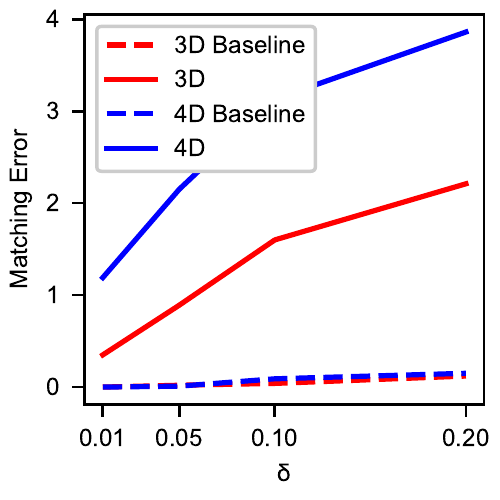}
		\caption{Voronoi}
	\end{subfigure}%
	\begin{subfigure}{0.45\textwidth}
		\centering
		\includegraphics{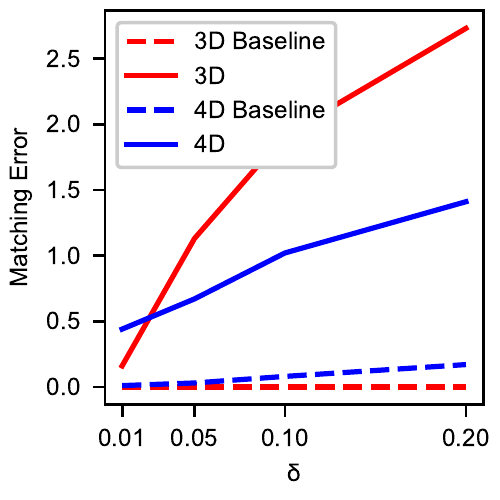}
		\caption{Simulator}
	\end{subfigure}%
	\caption{Average number of matching errors made by the algorithm as function of line search precision (see text). We show results on the 3D and 4D problems for our algorithm and the idealized baseline. Left: Voronoi region problem. Right: Problems drawn from the quantum device simulator.}
	\label{fig:error_alg}
\end{figure}
In our experiments, we first investigated how close the learned polytope is to the true polytope on the simulated devices, depending on the line search accuracy $\delta=\lVert x^+ - x^- \rVert$. For this, we computed the intersection over union of true and estimated polytopes and found an expected value between 0.9996 ($\delta=0.01$) and 0.995 ($\delta=0.02$), which are both very close to the optimum of 1. We did neither observe any relevant differences to the baseline algorithm, nor large differences between 3 and 4 dimensions. We then measured how many of the linear decision functions in the ground truth polytope were not approximated well in the estimated polytope. The recorded number of matching errors on the two simulated datasets as a function of the precision of the line search, $\delta$, can be seen in Figure~\ref{fig:error_alg}.
For both datasets, the error increased with $\delta$, where for the simulated devices, we saw a far steeper increase than for the Voronoi regions. For small values of $\delta$, we observed less than 0.1 matching errors, which increased to 4.5 at $\delta=0.2$ in 4D. Here, the baseline algorithm performed better.

\begin{figure}
	\begin{subfigure}{0.45\textwidth}
		\centering
	\includegraphics{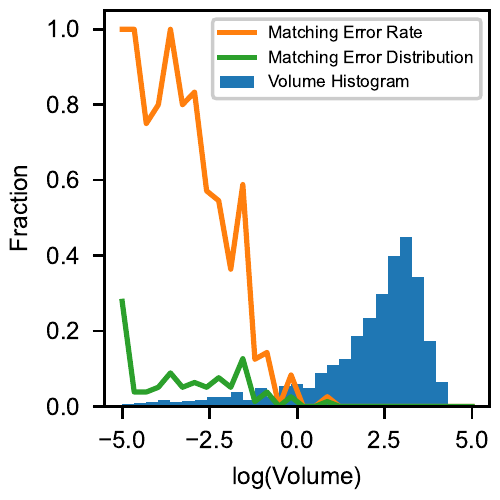}
		\caption{Voronoi,$\delta=0.2$}
	\end{subfigure}%
	\begin{subfigure}{0.45\textwidth}
		\centering
	\includegraphics{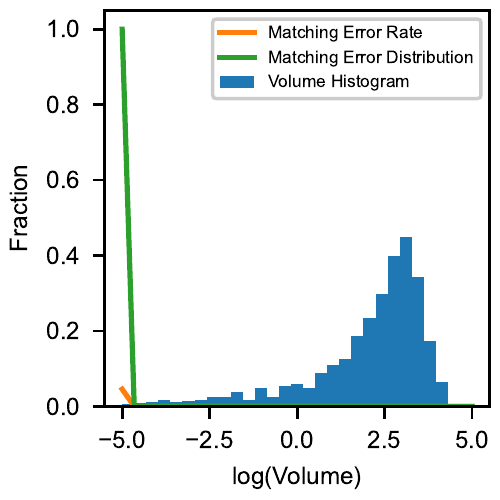}
		\caption{Voronoi,$\delta=0.01$}
	\end{subfigure}\\
	\begin{subfigure}{0.45\textwidth}
		\centering
		\includegraphics{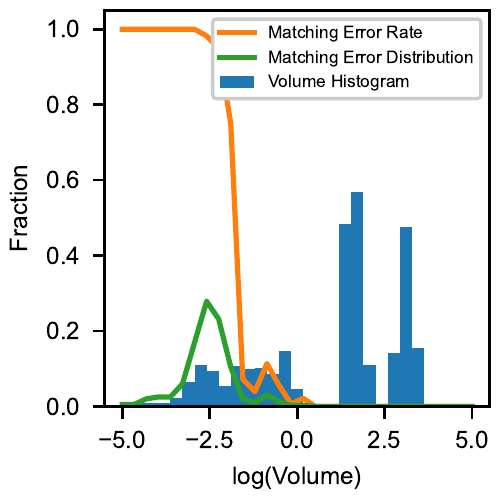}
		\caption{Simulator,$\delta=0.2$}
	\end{subfigure}%
	\begin{subfigure}{0.45\textwidth}
		\centering
		\includegraphics{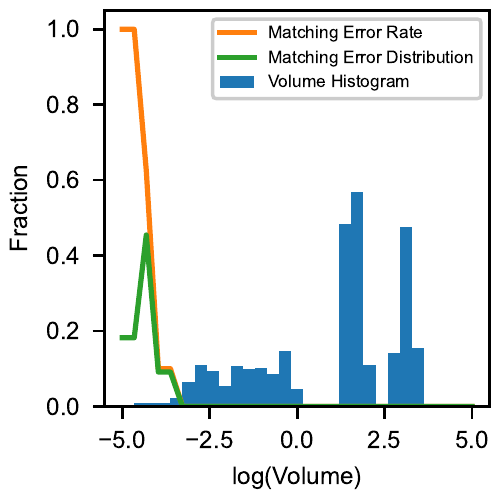}
		\caption{Simulator,$\delta=0.01$}
	\end{subfigure}%
	
	\caption{Histograms of the volumes of the facets of the convex polytopes in 4D, the relative distribution of matching errors and the error rate at a given volume and for different $\delta$. Top: Voronoi problem. Bottom: Simulated quantum devices.\label{fig:error_hist}}
\end{figure}
\begin{figure}
	\label{fig:dist_volume}
	\begin{subfigure}{0.45\textwidth}
		\centering
		\includegraphics{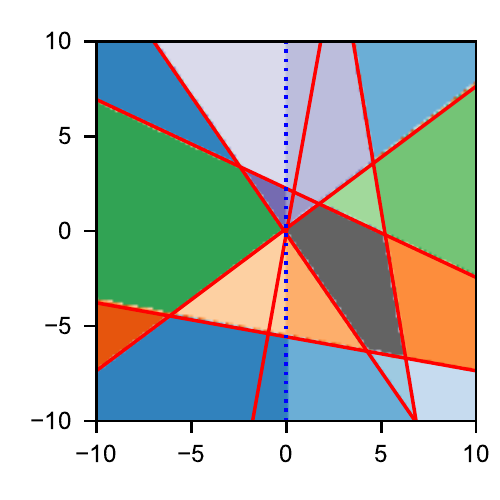}\vspace{-1.5em}
		\caption{Imprecise Facet}
		\label{fig:error_vis_4D_1}
	\end{subfigure}%
	\begin{subfigure}{0.45\textwidth}
		\centering
		\includegraphics{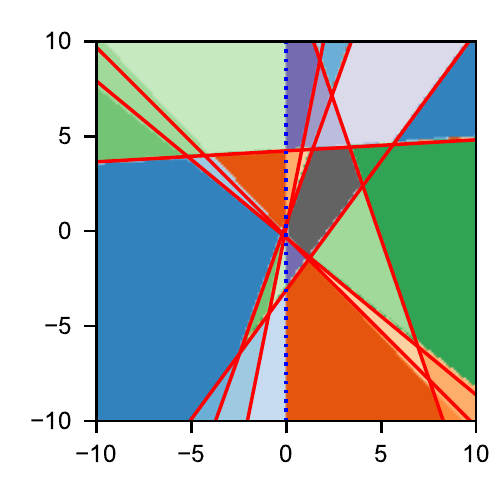}\vspace{-1.5em}
		\caption{Missing facet}
		\label{fig:error_vis_4D_2}
	\end{subfigure}%
	\caption{Examples of typical errors of the algorithm with $\delta=0.2$ on the simulated device. Shown are two different 2D cuts through the same 4D volume. Cuts are chosen such that a wrongly estimated facet appears as a vertical line through the origin (blue dashed line). Red lines represent the constraints of \emph{estimated} facets that intersect with the image plane. Areas are colored based on a partitioning of the plane by the ground truth constraints (not device states). The area belonging to the polytope is depicted in dark grey. An estimated constraint is correct if it aligns with a boundary of these areas. Units are displayed relative to the chosen origin.
	 Left: Error in estimating the correct angle of a small facet. Right: Missing edge due to small facet size.}
	\label{fig:error_vis_4D}
\end{figure}
To investigate the nature of the errors, we computed the hyper volume of the state transitions as a measure of size of the facet in the polytope. The histograms of volumes as well as relative distributions of errors are shown in Figure~\ref{fig:error_hist}. As can be seen from the figure, most errors appeared on facets with small volume. Further, we observed that facets with small volumes were estimated more reliably as the line search accuracy improved. To give a more qualitative impression of the errors observed, we visualized two examples of an error by creating random cuts through the 4D parameter space of the simulated device in such a way that an unmatched facet in the ground truth is visible within the cut. See Figure~\ref{fig:error_vis_4D} for common examples, extracted from a problem instance with 4 errors and obtained with $\delta=0.2$.

For the real device, the obtained model of our algorithm is shown in Figure~\ref{fig:device_experiment} in red color. In 2D, human experts verified the correctness of the state transitions and annotated the resulting states, based on a dense 2D raster scan shown in the background of Fig.~\ref{fig:device_experiment}(b). We can see that the algorithm found two very small facets which are invisible on the raster scan. In 3D no ground truth was available, but qualitatively the 10 facets found by the algorithm (six large ones and two pairs of thin slabs in Fig.~\ref{fig:device_experiment}(c)) are in line with how the experts understand this device: The 6 large transitions correspond to adding or removing one electron from each qubit dot, whereas the two pairs of smaller facets correspond to moving an electron between dot 1 and dot 2, or between dot 2 and dot 3. We further verified in simulations of the constant interaction model, including a simulation of the sensor dot, that our polytope is consistent with specific choices of the device parameters.

\begin{figure}
    \centering
    
	\begin{subfigure}{0.45\textwidth}
		\centering
		\includegraphics{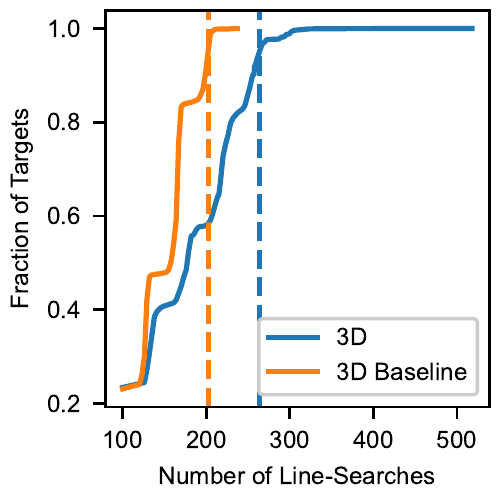}
		\caption{$d=3$, $\delta=0.01$}
	\end{subfigure}%
	\begin{subfigure}{0.45\textwidth}
		\centering
		\includegraphics{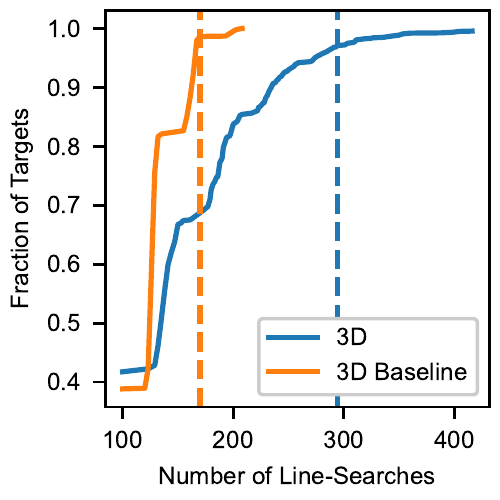}
		\caption{$d=3$, $\delta=0.2$}
	\end{subfigure}
	\begin{subfigure}{0.45\textwidth}
		\centering
		\includegraphics{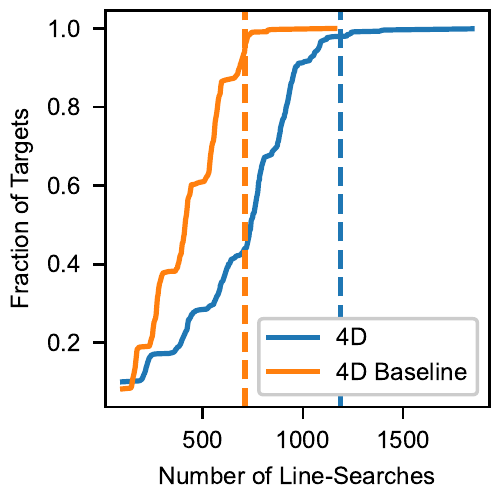}
		\caption{$d=4$, $\delta=0.01$}
	\end{subfigure}%
	\begin{subfigure}{0.45\textwidth}
		\centering
		\includegraphics{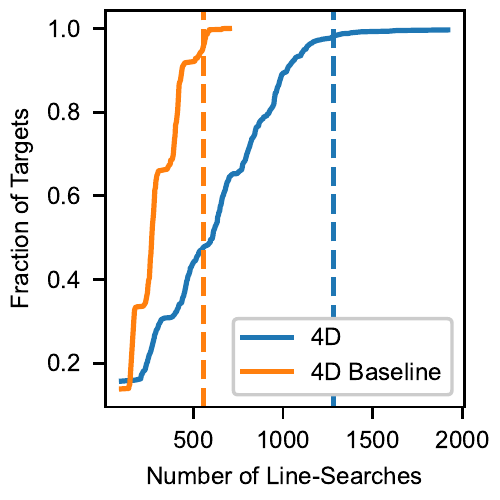}
		\caption{$d=4$, $\delta=0.2$}
	\end{subfigure}
	\caption{Estimated cumulative distribution function (ECDF) depicting the fraction of reached precision targets over number of line searches for the simulated device. For each run there are 20 reachable targets, geometrically spaced between $\epsilon_{\text{end}}=1.5 \cdot \delta$ and $10$. Let $X_i$ be the training set used to train model $\hat{P}_i$ in the $i$-th iteration of Algorithm~\ref{alg:alg_outer_loop}. The $j$th target $t_j$ is considered reached if we find an $i$ such, that $\max_{x \in X_{i+1}} \text{dist}(\partial \hat{P}_i,x) < t_j$. The $|X_i|$ for the smallest $i$ fulfilling this is used as the number of line searches required. Vertical dashed lines represent the point where 75\% of runs are terminated.}
	\label{fig:ecdf_searches}
\end{figure}
Finally, we investigated the run time and convergence of the algorithm. We measured the convergence speed of the algorithm in terms of the maximum deviation between transitions found by the line search and the model prediction. The results are shown in Figure~\ref{fig:ecdf_searches}. The number of line searches conducted on the simulated device in 4D was approximately 1500, while the baseline needed 500 line searches, with no clear tendencies depending on $\delta$. The run time of the algorithm for the real device (including line searches) was less than a minute for two dots and 30 minutes for three dots (15 minutes measurement time for 180 line searches). Computing an instance of the 4D problem on the simulated device takes approximately one hour on a single core CPU.

\section{Discussion and Conclusion}\label{sec:conc}
Our results show that it is possible to reliably estimate the polytope associated with state transitions out of a specific charge state in a quantum dot array. Overall, it appears that the algorithm finds a good estimate of the true polytope volumes and facet shapes within the control parameter space formed by the array's gate voltages. While the underlying optimization problem is non-convex, we did not find signs of bad local optima. This is likely because we verify the estimated polytope using new measurements which are designed to find potential modeling errors. Thus, a local optimum is likely caught and results in an new training iteration. This hypothesis is strengthened by the fact that we observe that with higher precision of the line search the distribution of errors is shifted towards smaller facets, showing that we are better at disproving local optima.

An example of the superiority of our approach to rastering can be seen in
Figure~\ref{fig:device_experiment}(b). Here, we can see that the algorithm managed to find two very small facets (labeled [0,2] and [2,0]) which are invisible on the raster scan. Operationally, these are two very important facets in the device as they amount to transitions of electrons \emph{between} quantum dots, while the other transitions are electrons entering or leaving the array. For the purposes of quantum computation, such inter-dot tunneling processes effectively turn on wave-function overlap between two electrons, which is a key resource to manipulate their spin states via Heisenberg exchange coupling~\citep{Cha21}. The same holds for the 3D polytope estimated in Figure~\ref{fig:device_experiment}(c), where small facets corresponding to inter-dot transitions have been found. 

Compared to random sampling, our sampling approach using active learning seems to be superior. For random sampling, the number of samples requires depends largely on the relative size of a facet compared to the overall surface area \cite{weber2022theoretical}. Thus, for the small size of facets on our polytope, an average of 1500 samples is significantly less than expected via random sampling.

Still, our algorithm is not perfect. Comparing the performance of our algorithm to the idealized baseline algorithm indicates that the majority of errors are introduced by the difficulties of solving the NP-hard polytope estimation problem, while our active learning scheme seems to works very well. Thus, our work will likely profit from continued development of improved and faster estimation algorithms.

We have successfully applied our algorithm to a real 2x2 quantum dot array, constituting the first automatic discovery of state transitions in the literature. As the underlying estimation problems are NP-hard and we observed a steep increase of run-time already going from 3D to 4D, we foresee practical limits of our algorithm for controlling large quantum dot arrays. However, in practice, one may not need to find all facets in order to implement desired qubit functionalities. For example, universal quantum computers can be constructed entirely from single- and two-qubit operations, meaning that many facets associated with a multi-dot state may be operationally irrelevant (such as facets corresponding to multi-electron co-tunneling transitions, which are likely very small facets that are hard to find). Alternatively, it might be possible to partition the device in blocks of smaller arrays, for which polytopes can be estimated independently. For example, for a linear array of many quantum dots, it might be possible to estimate all relevant local transitions of a quantum dot using its two-neighbourhood in the dot connection graph, a 5-dimensional subspace, which we have shown is already within our estimation capabilities. Another possible solution is to use more and better sensor signals to identify the state transitions.

A potential shortcoming is our assumption of linear state transitions, which form a convex polytope. This assumption is not true for devices that exhibit device instabilities. However, stable, non-hysteretic materials are essential prerequisites for quantum processors, and recent material improvements already led to very stable materials. Thus, we expect that future devices will fulfill this assumption to a much higher degree than devices that are available today.

Aside from requiring an initial point inside a target state, the algorithm runs unsupervised and automatically proposes new measurement points to refine the estimate of the state transitions. Thus, we foresee our method to have major impact in tuning quantum dot devices where human tuning capability is the limiting factor, even on the small devices operated today.

\section*{Acknowledgements}
This work received funding from the European Union under grant agreement No. 688539 and 951852. F.K. and A.C. acknowledge support from the Independent Research Fund Denmark.
Contributions: O.K. developed the algorithm and designed, built and executed the experiments on the simulated device. A.C., T.R. and B.B. developed the simulator and conducted the experiments on the real device. F.K. led the project. All authors contributed to writing the article.

\bibliography{references}

\end{document}